\documentclass{scrartcl}
\usepackage{url}
\usepackage{tikz,pgf}
\usetikzlibrary{fit}
\usepackage{amsmath,amssymb,amsthm}
\usepackage{xfrac}
\usepackage{booktabs}
\usepackage{multirow}
\usepackage{mdframed}
\newcommand{\spn}{\textsc{spn}}
\newcommand{\map}{\textsc{map}}
\newcommand{\np}{\textsc{np}}
\newcommand{\p}{\textsc{p}}

\newcommand{\spns}{{\spn}s}
\newcommand{\pr}{\mathbb{P}}
\newcommand{\set}[1]{\ensuremath{\mathcal{#1}}}

\allowdisplaybreaks[1]

\newtheorem{theorem}{Theorem}
\newtheorem{corollary}{Corollary}
\newtheorem{definition}{Definition}

\title{Approximation Complexity of \\ Maximum A Posteriori Inference \\ in Sum-Product Networks\footnote{A similar version of this manuscript appeared in  \cite{Conaty2017}.}}
\author{
      \textbf{Diarmaid Conaty}\\
      Queen's University Belfast, UK
      \vspace{0.8em}\\
      \textbf{Denis D.\ Mau\'a}\\
      Universidade de S\~ao Paulo, Brazil
      \vspace{0.8em}\\
      \textbf{Cassio P.\ de Campos}\\
      Queen's University Belfast, UK
      }

\date{}

\begin{document}

\maketitle

\begin{abstract}
  We discuss the computational complexity of approximating maximum a posteriori inference in sum-product networks. We first show \np-hardness in trees of height two by a reduction from maximum independent set; this implies non-approximability within a sublinear factor. We show that this is a tight bound, as we can find an approximation within a linear factor in networks of height two. We then show that, in trees of height three, it is \np-hard to approximate the problem within a factor $2^{f(n)}$ for any sublinear function $f$ of the size of the input $n$. Again, this bound is tight, as we prove that the usual max-product algorithm finds (in any network) approximations within factor $2^{c \cdot n}$ for some constant $c < 1$. Last, we present a simple  algorithm, and show that it provably produces solutions at least as good as, and potentially much better than, the max-product algorithm. We empirically analyze the proposed algorithm against max-product using synthetic and realistic networks.
\end{abstract}

\newpage

\section{Introduction}

Finding the maximum probability configuration is a key step of many
solutions to problems in artificial intelligence such as image segmentation \cite{Geman1984}, 3D
image reconstruction \cite{Boykov1998}, natural language processing \cite{Koo2010}, speech recognition \cite{PeharzKMP14}, sentiment analysis \cite{Zirn2011}, protein design \cite{Szeliski2008} and
multicomponent fault diagnosis \cite{Steinder2004}, to name but a few. This problem is often called (full) maximum a posteriori (\map) inference, or most likely explanation (\textsc{mpe}).

Sum-Product Networks (\spns) are a relatively new class of graphical
models that allow marginal inference in linear time in their size
\cite{PoonDomingos2011}.
This is therefore in sharp difference with
other graphical models such as Bayesian networks and Markov Random
Fields that require \#\p-hard effort to produce marginal inferences
\cite{Darwiche2009}.
Intuitively, an \spn\ encodes an arithmetic circuit
whose evaluation produces a marginal inference
\cite{Darwiche2003}.
\spns\ have received increasing popularity in
applications of machine learning due to their ability to represent
complex and highly multidimensional distributions
\cite{PoonDomingos2011,Amer2012,PeharzKMP14,cheng2014language,Nath2016aaai,AmerT16}.

In his PhD thesis, Peharz showed a direct proof of \np-hardness of \map\ in \spns\ by a reduction from maximum satisfiability \cite[Theorem 5.3]{Peharz2015}.\footnote{The original proof was incorrect, as it encoded clauses by products; a corrected proof was provided by the author in an erratum note \cite{Peharz2017}.}
Later, Peharz et al.\ \cite{PeharzGPD2016} noted that \np-hardness can be proved by transforming a Bayesian network with a naive Bayes structure into a distribution-equivalent \spn\ of height two (this is done by adding a sum node to represent the latent root variable and its marginal distribution and product nodes as children to represent the conditional distributions).
As \map\ inference in the former is \np-hard \cite{DeCampos2011}, the result follows.

In this paper, we show a direct proof of \np-hardness of \map\ inference by a reduction from maximum independent set, the problem of deciding whether there is a subset of vertices of a certain size in an undirected graph such that no two vertices in the set are connected.
This new proof is quite simple, and (as with the reduction from naive Bayesian networks) uses a sum-product network of height two.
An advantage of the new proof is that, as a corollary, we obtain the non-approximability of \map\ inference within a sublinear factor in networks of height two.
This is a tight bound, as we show that there is a polynomial-time algorithm that produces approximations within a linear factor in networks of height two.
Note that an \spn\ of height two is equivalent to a finite mixture of tractable distributions (e.g.\ an ensemble of bounded-treewidth Bayesian networks) \cite{Rooshenas2013}.
We prove that, in networks of height three, it is \np-hard to approximate the problem within any  factor $2^{f(n)}$ for any sublinear function $f$ of the input size $n$, even if the \spn\ is a tree. This a tight bound, as we show that the usual max-product algorithm \cite{Darwiche2003,PoonDomingos2011}, which replaces sums with maximizations, finds an approximation within a factor $2^{c\cdot n}$ for some constant $c<1$.
Table~\ref{tab:results} summarizes these results. As far as we are concerned, these are the first results about the complexity of approximating \map\ in \spns.

\begin{table} \centering
\begin{tabular}{ccc}
  \toprule
  \textsc{Height} & \textsc{Lower bound} & \textsc{Upper bound} \\ \midrule
  $\phantom{\geq} 1$      & 1 & 1 \\
  $\phantom{\geq} 2$      & $(m-1)^{\varepsilon}$ & $m-1$ \\
  $\geq 3$      &  $2^{s^{\varepsilon}}$ & $2^s$ \\ \bottomrule
\end{tabular}
\caption{Lower and upper bounds on the approximation threshold for a polynomial-time algorithm: $s$ denotes the size of the instance, $m$ is the number of internal nodes, $\varepsilon$ is a nonnegative number less than 1.}
\label{tab:results}
\end{table}

We also show that a simple modification to the max-product algorithm leads to an algorithm that produces solutions which are never worse and potentially significantly better than the solutions produced by max-product.  We compare the performance of the proposed algorithm against max-product in several structured prediction tasks using both synthetic networks and \spns\ learned from real-world data. The synthetic networks encode instances of maximum independent set problems.
The purpose of these networks is to evaluate the quality of solutions produced by both algorithms on shallow \spns\ which (possibly) encode hard to approximate \map\ problems. Deeper networks are learned using the \textsc{LearnSPN} algorithm by Gens and Domingos \cite{GensD2013}.
The purpose of these experiments is to assess the relative quality of the algorithms on \spns\ from realistic datasets, and their sensitivity to evidence.
The empirical results show that the proposed algorithm often finds significantly better solutions than max-product does, but that this improvement is less pronounced in networks learned from real data. We expect these results to foster research in new approximation algorithms for \map\ in \spns.

Before presenting the complexity results in Section \ref{sec:complexity}, we first review the definition of sum-product networks, and comment on a few selected results from the literature in Section \ref{sec:spns}.
The experiments with the proposed modified algorithm and max-product appear in Section \ref{sec:experiments}.
We conclude the paper with a review of the main results  in Section~\ref{sec:conclusion}.

\section{Sum-Product Networks} \label{sec:spns}

We use capital letters without subscripts to denote random vectors (e.g. $X$), and capital letters with subscripts to denote random variables (e.g., $X_1$). If $X$ is a random vector, we call the set $\set{X}$ composed of the random variables $X_i$ in $X$ its \emph{scope}.
The scope of a function of a random vector is the scope of the respective random vector. In this work, we constrain our discussion to random variables with finite domains.

Poon and Domingos \cite{PoonDomingos2011} originally defined \spns\ as multilinear functions of indicator variables that allow for space and time efficient representation and inference.
In its original definition \spns\ were not constrained to represent valid distributions; this was achieved by imposing properties of consistency and completeness.
This definition more closely resembles Darwiche's arithmetic circuits which represent the \emph{network polynomial} of a Bayesian network \cite{Darwiche2003}, and also allow efficient inference (in the size of the circuit).
See \cite{Choi2017icml} for a recent discussion on the (dis)similarities between arithmetic circuits and sum-product networks.

Later, Gens and Domingos \cite{GensD2013} re-stated \spns\ as complex mixture distributions as follows.
\begin{itemize}
\item Any univariate distribution is an \spn.
\item Any weighted sum of \spns\ with the same scope and nonnegative weights is an \spn.
\item Any product of \spns\ with disjoint scopes is an \spn.
\end{itemize}
This alternative definition (called generalized \spns\ by Peharz \cite{Peharz2015}) implies \emph{decomposability}, a stricter requirement than consistency.
Peharz et al.\ \cite{PeharzTPD2015} showed that any consistent \spn\ over discrete random variables can be transformed in an equivalent decomposable \spn\ with a polynomial increase in size, and that weighted sums can be restricted to the probability simplex without loss of expressivity.
Hence, we assume in the following that \spns\ are \emph{normalized}: the weights of a weighted sum add up to one.
This implies that \spns\ specify (normalized) distributions. A similar result was obtained by Zhao et al.\ \cite{ZhaoMP2015}.
We note that the base of the inductive definition can also be extended to accommodate any class of tractable distributions (e.g., Chow-Liu trees) \cite{Rooshenas14,Vergari15}. For the purposes of this work, however, it suffices to consider only univariate distributions.

An \spn\ is usually represented graphically as a weighted rooted graph
where each internal node is associated with an operation $+$ or
$\times$, and leaves are associated with variables and distributions.
The arcs from a sum node to its children are weighted according to the
corresponding convex combinations.
The remaining arcs have implicitly weight 1.
The height of an \spn\ is defined as the maximum distance, counted as number of arcs, from the root to a leaf of its graphical representation.
Figure~\ref{fig:spn} shows an example of an \spn\
with scope $\{X_1,X_2\}$ and height two. Unit weights are omitted in the figure.
Note that by definition every node represents an \spn\ (hence a distribution) on its own; we refer to nodes and their corresponding \spns\ interchangeably.

\begin{figure} \centering
  \begin{tikzpicture}[very thick]
    \node[draw,circle] (A) at (0,3.5) {$+$};
    \node[draw,circle] (B) at (-3,2) {$\times$};
    \node[draw,circle] (C) at (0,2) {$\times$};
    \node[draw,circle] (D) at (3,2) {$\times$};
    \node[draw,circle,label=below:{$0.4$},inner sep=2pt] (E) at (-1.5,0.5) {$X_1$};
    \node[draw,circle,label=below:{$0.9$},inner sep=2pt] (F) at (4.5,0.5) {$X_1$};
    \node[draw,circle,label=below:{$0.7$},inner sep=2pt] (G) at (-4.5,0.5) {$X_2$};
    \node[draw,circle,label=below:{$0.2$},inner sep=2pt] (H) at (1.5,0.5) {$X_2$};

    \draw (A) edge node[above left] {0.2} (B);
    \draw (A) edge node[left] {0.5} (C);
    \draw (A) edge node[above right] {0.3} (D);

    \foreach \x/\y in {B/E, C/E, D/F, B/G, C/H, D/H} {
      \draw (\x) edge (\y);
    }
\end{tikzpicture}
\caption{A sum-product network over binary variables $X_1$ and $X_2$. Only the probabilities $\pr(X_i=1)$ are shown.}
\label{fig:spn}
\end{figure}
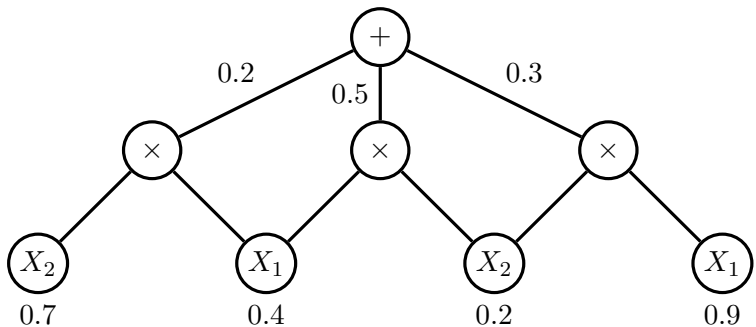

Consider an \spn\ $S(X)$ over a random vector $X=(X_1,\dotsc,X_n)$. The value of $S$ at a point $x=(x_1,\dotsc,x_n)$ in its domain is denoted by $S(x)$ and defined recursively as follows.
The value of a leaf node is the value of its corresponding distribution at the point obtained by projecting $x$ onto the scope of the node.
The value of a product node is the product of the values of its children at $x$. Finally, the value of a sum node is the weighted average of its
children's values at $x$. For example, the value of the \spn\ $S(X_1,X_2)$ in Figure~\ref{fig:spn} at the point $(1,0)$ is $S(1,0)=0.2 \cdot 0.3 \cdot  0.4 +0.5 \cdot 0.4 \cdot 0.8 + 0.3\cdot 0.8 \cdot 0.9 = 0.4$. Note that since we assumed \spns\ to be normalized, we have that $\sum_x S(x) = 1$.

Let $\set{E} \subseteq \{1,\dotsc,n\}$ and consider a random vector
$X_{\set{E}}$ with scope $\{ X_i: i \in \set{E}\}$, and an assignment $e = \{ X_i = e_i: i \in \set{E} \}$. We write $x \sim e$ to denote a value of $X$ consistent with $e$ (i.e., the projection of $x$ on $\set{E}$ is $e$).
Given an \spn\ $S(X)$ representing a distribution $\pr(X)$, we denote the marginal probability $\pr(e)=\sum_{x \sim e} S(x)$ by $S(e)$. This value can be computed by first marginalizing the variables $\{X_j: j \not\in \set{E}\}$ from every (distribution in a) leaf and
then propagating values as before.
Thus marginal probabilities can be
computed in time linear in the network size (considering univariate distributions are represented as tables).
The marginal probability $\pr(X_2=0)=0.7$ induced by the \spn\ in Figure~\ref{fig:spn} can be obtained by first marginalizing leaves without $\{X_2\}$ (thus producing values 1 at respective leaves), and then propagating values as before.

In this work, we are interested in the following computational problem with \spns:

\begin{definition}[Functional \map\ inference problem]
Given an \spn\ $S$ specified with rational weights and an assignment $e$, find $x^*$ such that $S(x^*)=\max_{x \sim e} S(x)$.
\end{definition}

A more general version of the problem would be to allow some of the variables to be summed out, while others are maximized. However, the marginalization (i.e., summing out) of variables can performed in polynomial time as a preprocessing step, the result of which is a \map\ problem as stated above. We stick with the above definition for simplicity (bearing in mind that complexity is not changed).

To prove \np-completeness, we use the decision variant of the problem:

\begin{definition}[Decision \map\ inference problem]
Given an \spn\ $S$ specified with rational weights, an assignment $e$ and a rational $\gamma$, decide whether  $\max_{x \sim e} S(x) \geq \gamma$.
\end{definition}

We denote both problems by \map, as the distinction to which particular (functional or decision) version we refer should be clear from context. Clearly, \np-completeness of the decision version establishes \np-hardness of the functional version. Also, approximation complexity always refers to the functional version.

The support of an \spn\ $S$ is the set of configurations of its domain with positive values: $\text{supp}(S)=\{x: S(x) > 0\}$. An \spn\ is \emph{selective} if for every sub-\spn\ $T$ corresponding to a sum node in $S$ it follows that the supports of any two children are disjoint. Peharz et al.\ \cite{PeharzGPD2016} recently showed that \map\ is tractable in selective \spns.

Here, we discuss the complexity of (approximately) solving \map\ in general \spns. We assume that instances of the \map\ problem are represented as bitstrings $\langle S, e \rangle$ using a reasonable encoding; for instance, weights and probabilities are rational values represented by two integers in binary notation, and graphs are represented by (a binary encoding of their) adjacency lists.

\section{Complexity Results} \label{sec:complexity}

As we show in this section, there is a strong connection between the height of an \spn\ and the complexity of \map\ inferences. First, note that an \spn\ of height 0 is just a marginal distribution. So consider an \spn\ of height 1. If the root is a sum node, then the network encodes a sum of univariate distributions (over the same variable), and \map\ can be solved trivially by enumerating all values of that variable. If on the other hand the root is a product node, then the network encodes a distribution of fully independent variables. Also in this case, we can solve \map\ easily by optimizing independently for each variable. So \map\ in networks of height 1 or less is solvable in polynomial time.

Let us now consider \spns\ of height 2. As already discussed in the introduction, Peharz et al.\ \cite{PeharzGPD2016} briefly observed that the \map\ problem is \np-hard even for tree-shaped networks of height 2. Here, we give the following alternative, direct proof of \np-hardness of \map\ in \spns, that allows us to obtain results on non-approximability.

\begin{theorem} \label{thm:indepset}
  \map\ in sum-product networks is \np-complete even if there is no evidence, and the underlying graph is a tree of height 2.
\end{theorem}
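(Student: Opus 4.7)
The plan is a polynomial-time reduction from maximum independent set to the decision \map\ problem. Membership in \np\ is immediate, since for any candidate $x^*$ the value $S(x^*)$ can be computed in time linear in the size of $S$ and compared against $\gamma$, so I focus on hardness.

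Given a graph $G=(V,E)$ with $n=|V|$ and a threshold $k$, I build a height-2 tree \spn\ $S$ over Boolean variables $\{X_v : v \in V\}$ such that, for a suitable rational $\gamma$, one has $\max_x S(x) \geq \gamma$ iff $G$ admits an independent set of size at least $k$. The architecture is as simple as possible: the root is a sum node with one product child $P_v$ per vertex, and each $P_v$ has $n$ univariate leaves, one per variable, with no leaf shared across products (so the underlying graph is genuinely a tree). The leaves of $P_v$ are chosen so that $P_v$ fires only on configurations encoding ``$v$ is in the set and no neighbor of $v$ is'': the leaf for $X_v$ puts all mass on $1$, each leaf for a neighbor of $v$ puts all mass on $0$, and each leaf for a non-neighbor is uniform. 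Writing $T=\{v : x_v=1\}$, this makes $P_v(x) = (1/2)^{n-1-\deg(v)}$ when $v\in T$ and no neighbor of $v$ lies in $T$, and $P_v(x)=0$ otherwise.

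The decisive step is to pick mixture weights that cancel the degree-dependent factor: I take $w_v$ proportional to $2^{n-1-\deg(v)}$, normalized by $W = \sum_u 2^{n-1-\deg(u)}$. After cancellation, $S(x) = \tfrac{1}{W}\,\bigl|\{v \in T : v \text{ has no neighbor in } T\}\bigr|$, i.e., $S(x)$ counts, up to the factor $1/W$, the number of vertices that are isolated in the induced subgraph $G[T]$. Since those isolated vertices themselves form an independent set, the count is at most the maximum independent set size $\alpha(G)$, with equality attained by taking $T$ to be any maximum independent set. Hence $\max_x S(x) = \alpha(G)/W$, and setting $\gamma = k/W$ closes the reduction. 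It remains to observe that the construction has $1+n+n^2$ nodes and rational weights representable with $O(n)$ bits, that it uses no evidence, and that the resulting \spn\ is complete and decomposable (every $P_v$ has full scope $V$ and its leaves have pairwise disjoint scopes).

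The main obstacle is precisely this weight calibration. With uniform weights $w_v = 1/n$, the contribution of each vertex to $S(x)$ would depend on its degree through the factor $(1/2)^{n-1-\deg(v)}$, so that high-degree vertices are undervalued and a maximum independent set need not maximize $S$. The exponential reweighting $w_v \propto 2^{n-1-\deg(v)}$ makes every isolated vertex of $G[T]$ contribute the same quantum $1/W$, turning $\max_x S(x)$ into a faithful rescaling of $\alpha(G)$; once this is in place, the rest of the argument is bookkeeping.
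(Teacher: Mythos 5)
Your construction is identical to the paper's: one product node per vertex with leaves forcing $X_v=1$, neighbors to $0$, and non-neighbors uniform, plus the same degree-compensating weights $w_v \propto 2^{n-1-\deg(v)}$, so that $\max_x S(x)=\alpha(G)/W$. The argument is correct and matches the paper's proof essentially step for step, including the key observation that the vertices contributing positively form an independent set.
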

\begin{proof}
Membership is straightforward as we can evaluate the probability of a configuration in polynomial time.

We show hardness by reduction from the \np-hard problem maximum independent set (see e.g.~\cite{Zuckerman07}): Given an undirected graph $G=(V,E)$ with  vertices $\{1,\ldots,n\}$ and an integer $v$, decide whether there is an independent set of size $v$. An independent set is a subset $V' \subseteq V$ such that no two vertices are connected by an edge in $E$.

Let $N_i$ denote the neighbors of $i$ in $V$.
For each $i\in V$, build a product node $S_i$ whose children are leaf nodes $S_{i1},\dotsc,S_{in}$ with scopes $X_1,\dotsc,X_n$, respectively. If $j \in N_i$ then associate $S_{ij}$ with distribution $\pr(X_i=1)=0$; if $j\notin N_i\cup\{i\}$ associate $S_{ij}$ with $\pr(X_j=1)=1/2$; finally, associate $S_{ii}$ with distribution $\pr(X_i=1)=1$.
See Figure~\ref{fig:indset} for an example.
Let $n_i=|N_i|$ be the number of neighbors of $i$. Then $S_i(x)=1/2^{n-n_i-1}$ if  $x_i=1$ and $x_j=0$ for all $j \in N_i$; and $S_i(x)=0$ otherwise. That is, $S_i(x) > 0$ if there is a set $V'$ which contains $i$ and does not contain any of its neighbors.
Now connect all product nodes $S_i$ with a root sum node parent $S$; specify the weight from $S$ to $S_i$ as $w_i=2^{n-n_i-1}/c$, where $c=\sum_i 2^{n-n_i-1}$.
Suppose there is an independent set $I$ of size $v$. Take $x$ such that $x_i=1$ if $i \in I$ and $x_i=0$ otherwise. Then $S(x)=v/c$. For any configuration $x$ of the variables, let $I(x)=\{i: S_i(x) > 0\}$. Then $I(x)$ is an independent set of size $c\cdot S(x)$. So suppose that there is no independent set of size $v$. Then $\max_x S(x) < v/c$. Thus, there is an independent set if and only if $\max_x S(x) \geq v/c$.
\end{proof}

\begin{figure*}[t] \centering
  \resizebox{\textwidth}{!}{
  \begin{tikzpicture}[very thick]
    \begin{scope}
      \node[draw,circle,label={above:$S$}] (S) at (-6,4) {$+$};
      \foreach \x/\y/\z/\w/\o [count=\i] in {%
            $1$/$0$/$0$/$0$/-12cm,%
            $0$/$1$/$0$/$\sfrac{1}{2}$/-8cm,%
            $0$/$0$/$1$/$0$/-4cm,%
            $0$/$\sfrac{1}{2}$/$0$/$1$/0cm
        } {
        \begin{scope}[xshift=\o]
          \node[draw,circle,label={left:$S_\i$}] (P\i) at (0,2) {$\times$};
          \node[draw,circle,label=below:{\x},inner sep=1pt] (X1\i) at (-1.5,0.5) {$X_1$};
          \node[draw,circle,label=below:{\y},inner sep=1pt] (X2\i) at (-0.5,0.5) {$X_2$};
          \node[draw,circle,label=below:{\z},inner sep=1pt] (X3\i) at (0.5,0.5) {$X_3$};
          \node[draw,circle,label=below:{\w},inner sep=1pt] (X4\i) at (1.5,0.5) {$X_4$};
          \foreach \x in {1,2,3,4} {
            \draw (P\i) edge (X\x\i);
          }
        \end{scope}
      }
      \foreach \i/\w in {1/$\sfrac{1}{6}$,2/$\sfrac{1}{3}$,3/$\sfrac{1}{6}$,4/$\sfrac{1}{3}$} {
        \draw (S) edge node[midway, above, sloped, align=center] {\w} (P\i);
      }
    \end{scope}
    \begin{scope}[xshift=4cm,yshift=1.5cm]
      \node[draw,circle] (1) at (0,0) {1};
      \node[draw,circle] (2) at (2,0) {2};
      \node[draw,circle] (3) at (2,2) {3};
      \node[draw,circle] (4) at (0,2) {4};
      \foreach \x/\y in {1/2,2/3,3/4,4/1,1/3} {
        \draw (\x) -- (\y);
      }
    \end{scope}
  \end{tikzpicture}}
  \caption{A sum-product network encoding the maximum independent set problem for the graph on the right. Only the values for $\pr(X_i=1)$ are shown.}
  \label{fig:indset}
\end{figure*}
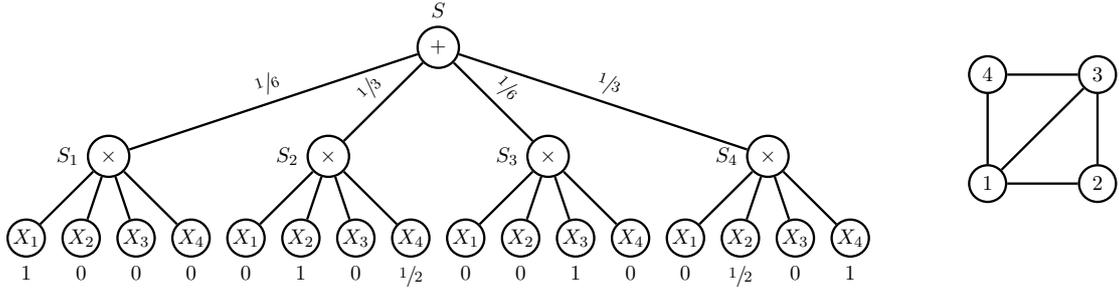

Consider a real-valued function $f(\langle S,e \rangle)$ of the encoded network $S$ and evidence $e$. An algorithm for \map\ in \spns\ is a $f(\langle S,e \rangle)$-approximation if it runs in time polynomial in the size of its input (which specifies the graph, the weights, the distributions, the evidence) and outputs a configuration $\tilde{x}$  such that
$S(\tilde{x}) \cdot f(\langle S,e \rangle) \geq \max_{x\sim e} S(x)$. That is, a $f(\langle S,e \rangle)$-approximation algorithm provides, for every instance $\langle S,e \rangle$ of the \map\ problem, a solution whose value is at most a factor $f(\langle S,e \rangle)$ from the optimum value. The value $f(\langle S,e \rangle)$ is called the \emph{approximation factor}. We have the following consequence of Theorem~\ref{thm:indepset}:

\begin{corollary}\label{cor:indepset}
Unless \p\ equals \np, there is no $(m-1)^{\varepsilon}$-approximation algorithm for \map\ in \spns\ for any $0\leq\varepsilon < 1$, where $m$ is the number of internal nodes of the \spn, even if there is no evidence and the  underlying graph is a tree of height 2.
\end{corollary}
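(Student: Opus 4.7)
My plan is to derive this from the reduction built in the proof of Theorem~\ref{thm:indepset}, by composing it with the tight inapproximability of maximum independent set. The first step is to count internal nodes in that reduction: the \spn\ constructed from a graph $G$ on $n$ vertices contains exactly one sum node (the root $S$) together with the $n$ product nodes $S_1,\ldots,S_n$, so $m = n+1$ and $(m-1)^{\varepsilon} = n^{\varepsilon}$. The second ingredient is already spelled out in the theorem's proof: the map $x \mapsto I(x) = \{i : S_i(x) > 0\}$ sends any configuration to an independent set of $G$ of size exactly $c \cdot S(x)$, where $c = \sum_i 2^{n-n_i-1}$. In particular $\max_x S(x) = v^{\star}/c$, where $v^{\star}$ is the maximum independent set size in $G$.

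I would then argue by contradiction. Suppose $\mathcal{A}$ is a polynomial-time $(m-1)^{\varepsilon}$-approximation algorithm for \map\ in \spns. Given an arbitrary graph $G$, I would build the corresponding \spn\ $S$ in polynomial time, run $\mathcal{A}$ on it with empty evidence, and obtain a configuration $\tilde{x}$ satisfying $S(\tilde{x}) \geq \max_x S(x)/n^{\varepsilon} = v^{\star}/(c\,n^{\varepsilon})$. Then $I(\tilde{x})$ is an independent set of $G$ of size at least $v^{\star}/n^{\varepsilon}$. The composition of these polynomial-time steps is thus a polynomial-time $n^{\varepsilon}$-approximation algorithm for maximum independent set.

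The last step is to invoke Zuckerman's theorem~\cite{Zuckerman07}: unless $\p = \np$, maximum independent set admits no polynomial-time $n^{1-\delta}$-approximation for any constant $\delta > 0$. Setting $\delta := 1 - \varepsilon$, which is strictly positive since $\varepsilon < 1$, the factor $n^{\varepsilon} = n^{1-\delta}$ falls precisely into the forbidden regime, yielding the required contradiction.

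I do not expect significant obstacles: the only delicate point is verifying that the reduction of Theorem~\ref{thm:indepset} is gap-preserving rather than merely gap-introducing. This holds because $c$ depends only on $G$ and the identity $c \cdot S(x) = |I(x)|$ is exact, so any multiplicative approximation guarantee for \map\ transfers with the same multiplicative factor to maximum independent set. With that observation the corollary follows as a black-box combination of the existing reduction and the classical hardness of approximating independent set.
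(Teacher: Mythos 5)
Your proposal is correct and follows essentially the same route as the paper's own proof: instantiate the reduction of Theorem~\ref{thm:indepset}, use the exact identity $c\cdot S(x)=|I(x)|$ together with $m-1=n$ to turn a hypothetical $(m-1)^{\varepsilon}$-approximation for \map\ into an $n^{\varepsilon}$-approximation for maximum independent set, and conclude via Zuckerman's inapproximability result. The only cosmetic difference is that the paper additionally remarks that the reduction is a weighted reduction in the sense of Bulatov et al.\ before giving the same direct argument you give.
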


\begin{proof}
The proof of Theorem~\ref{thm:indepset} encodes a maximum independent set problem and the reduction is a weighted reduction (see Definition 1 in~\cite{Bulatov2012}), which suffices to prove the result. To dispense with weighted reductions, we now give a direct proof. So suppose that there is a $(m-1)^{\varepsilon}$-approximation algorithm for \map\ with $0\leq\varepsilon < 1$. Let $\tilde{x}$
be the configuration returned by this algorithm when applied to the \spn\ $S$ created in the proof of Theorem~\ref{thm:indepset} for a graph $G$ given as input of the maximum independent set problem. We have that
\[
S(\tilde{x}) \cdot c \cdot (m-1)^{\varepsilon} \geq c \cdot \max_{x} S(x) = \max_{I \in \set{I}(G)} |I| \, ,
\]
where $c=\sum_i 2^{n-n_i-1}$, $\set{I}(G)$ is the collection of independent sets of $G$, $n$ is the number of vertices in $G$ and $n_i$ is the number of neighbors of vertex $i$ in $G$.
Consequently, this algorithm is a $n^{\varepsilon}$-approximation for maximum independent set (note that $n=m-1$ by construction). We know that there is no $n^{\varepsilon}$-approximation for maximum independent set with $0\leq\varepsilon < 1$ unless \p\ equals \np~\cite{Zuckerman07}, so the result follows.
\end{proof}

\begin{figure*}[t] \centering
  \resizebox{\textwidth}{!}{
  \begin{tikzpicture}[very thick]
    \node[draw,circle] (S) at (0,4) {$+$};
    \foreach \x/\y/\z/\w/\o [count=\i] in {%
          $0$/$0$/$1$/$\sfrac{1}{2}$/-12cm,%
          $1$/$1$/$1$/$\sfrac{1}{2}$/-8cm,%
          $1$/$0$/$0$/$\sfrac{1}{2}$/-4cm,%
          $0$/$1$/$1$/$\sfrac{1}{2}$/0cm,%
          $0$/$0$/$0$/$\sfrac{1}{2}$/4cm,%
          $1$/$1$/$0$/$\sfrac{1}{2}$/8cm,%
          $0$/$1$/$0$/$\sfrac{1}{2}$/12cm%
      } {
      \begin{scope}[xshift=\o]
        \node[draw,circle] (P1\i) at (0,2) {$\times$};
        \node[draw,circle,label=below:{\x},inner sep=1pt] (X1\i) at (-1.5,0.5) {$X_1$};
        \node[draw,circle,label=below:{\y},inner sep=1pt] (X2\i) at (-0.5,0.5) {$X_2$};
        \node[draw,circle,label=below:{\z},inner sep=1pt] (X3\i) at (0.5,0.5) {$X_3$};
        \node[draw,circle,label=below:{\w},inner sep=1pt] (X4\i) at (1.5,0.5) {$X_4$};
        \foreach \x in {1,2,3,4} {
          \draw (P1\i) edge (X\x\i);
        }
      \end{scope}
    }
    \foreach \x/\y/\z/\w/\o [count=\i] in {%
          $0$/$\sfrac{1}{2}$/$0$/$0$/-12cm,%
          $1$/$\sfrac{1}{2}$/$1$/$0$/-8cm,%
          $1$/$\sfrac{1}{2}$/$0$/$1$/-4cm,%
          $0$/$\sfrac{1}{2}$/$1$/$0$/0cm,%
          $0$/$\sfrac{1}{2}$/$0$/$1$/4cm,%
          $1$/$\sfrac{1}{2}$/$1$/$1$/8cm,%
          $0$/$\sfrac{1}{2}$/$1$/$1$/12cm%
      } {
      \begin{scope}[xshift=\o]
        \node[draw,circle] (P2\i) at (0,6) {$\times$};
        \node[draw,circle,label=above:{\x},inner sep=1pt] (X1\i) at (-1.5,7.5) {$X_1$};
        \node[draw,circle,label=above:{\y},inner sep=1pt] (X2\i) at (-0.5,7.5) {$X_2$};
        \node[draw,circle,label=above:{\z},inner sep=1pt] (X3\i) at (0.5,7.5) {$X_3$};
        \node[draw,circle,label=above:{\w},inner sep=1pt] (X4\i) at (1.5,7.5) {$X_4$};
        \foreach \x in {1,2,3,4} {
          \draw (P2\i) edge (X\x\i);
        }
      \end{scope}
    }
    \foreach \p in {1,2,3,4,5,6,7} {
      \draw (S) edge (P1\p);
      \draw (S) edge (P2\p);
    }
\end{tikzpicture}}
\caption{A sum-product network encoding the Boolean formula $(\neg X_1 \vee X_2 \vee \neg X_3) \wedge (\neg X_1 \vee X_3 \vee X_4)$. We represent only the probabilities $\pr(X_i=1)$, and omit the uniform weights $\sfrac{1}{14}$ of the root sum node.}
\label{fig:spn-cnf}
\end{figure*}
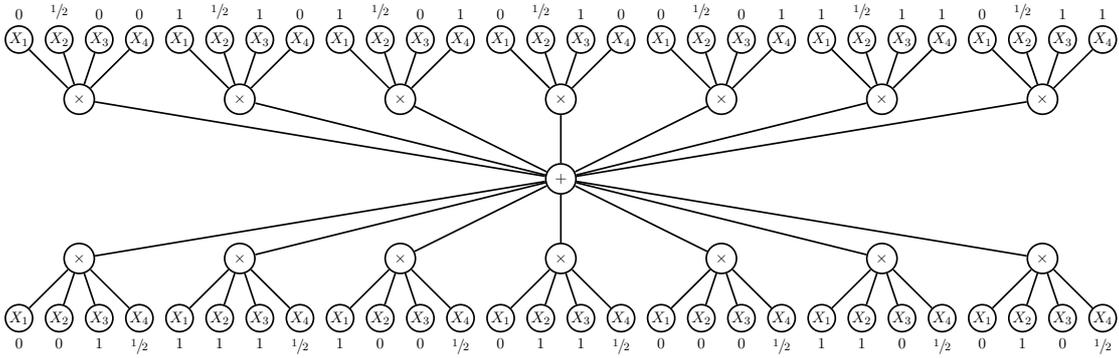

The above result shows that mixtures (or ensembles) of selective \spns\ are  \np-hard to approximate. Note that mixtures of (selective) \spns\ are typically obtained when using bagging to reduce variance in complex models \cite{Breiman1996}. The result can also be used to prove complexity for similar models such as mixture of trees \cite{Meila2000}, mixture of naive Bayes models \cite{Lowd2005} and mixture of arithmetic circuits \cite{Rooshenas2013}.  Corollary~\ref{cor:indepset} can be read as stating that there is \emph{probably} no approximation algorithm for \map\ with sublinear approximation factor in the size of the input. The following result shows that this lower bound is tight:

\begin{theorem} \label{thm:linearapprox}
There exists a $(m-1)$-approximation algorithm for \map\ in sum-product networks whose underlying graph has height at most 2, where $m$ is the number of internal nodes.
\end{theorem}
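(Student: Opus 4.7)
The approach is algorithmic: I describe a procedure that branches on the type of the root and analyze its approximation ratio case by case. For heights zero and one, exact \map\ is already solvable in polynomial time (as discussed at the beginning of the section), so a $1$-approximation trivially suffices. For height exactly two with a product root, the children have pairwise disjoint scopes and each has height at most one; the maximization factorizes, so optimizing each child independently gives the exact optimum. The only genuinely interesting case is therefore a sum root of height two.

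Let $S$ be such a sum root with children $S_1,\ldots,S_k$ and weights $w_1,\ldots,w_k$ summing to one. The algorithm computes, for each $i$, an exact maximizer $x_i^* \in \arg\max_{x \sim e} S_i(x)$ with value $v_i = S_i(x_i^*)$; this is feasible because each $S_i$ has height at most one. It then returns the candidate $\tilde{x} \in \arg\max_i S(x_i^*)$. For the analysis one bounds
\[
\max_{x \sim e} S(x) = \max_{x \sim e} \sum_{i=1}^k w_i S_i(x) \leq \sum_{i=1}^k w_i v_i \leq k \cdot \max_i w_i v_i,
\]
while simultaneously
\[
S(\tilde{x}) \geq \max_i S(x_i^*) \geq \max_i w_i S_i(x_i^*) = \max_i w_i v_i,
\]
so $\max_{x \sim e} S(x) \leq k \cdot S(\tilde{x})$.

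The main subtlety is tying $k$ to the node count $m$. By completeness all children of the sum root share the same scope; if that scope consists of a single variable, then \map\ reduces to enumerating the values of one variable, which is exact. Otherwise the common scope has at least two variables, and since a leaf carries univariate scope every child $S_i$ must be an internal product node. Together with the root these account for exactly $m = k + 1$ internal nodes, so $k = m - 1$. Combining this with the exact cases above yields an $(m-1)$-approximation on every \spn\ of height at most two.
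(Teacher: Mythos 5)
Your proof is correct and follows essentially the same route as the paper's: handle product roots and low heights exactly, solve each child of a sum root exactly, return the best of the children's candidate solutions, and bound the optimum via $\sum_i w_i v_i \leq k\cdot\max_i w_i v_i$. The only differences are minor --- you return $\arg\max_i S(x_i^*)$ where the paper returns $\arg\max_i w_i S_i(x^i)$ (both achieve the factor, and yours is never worse), and you make explicit the completeness argument giving $k = m-1$, which the paper leaves implicit.
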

\begin{proof}
Consider a sum-product network of height 2. If the root is a product node then the problem decomposes into independent \map\ problems in \spns\ of height 1; each of those problems can be solved exactly. So assume that the root $S$ is a sum node connected to either leaf nodes or to nodes which are connected to leaf nodes.
Solve the respective \map\ problem for each child $S_i$ independently (which is exact, as the corresponding \spn\ has height at most 1); denote by $x^i$ the corresponding solution.
Note that $S_i(x^i)$ is an upper bound on the value $S_i(x^*)$, where $x^*$ is a (global) \map\ configuration.
Let $w_1,\dotsc,w_{m-1}$ denote the weights from the root to children $S_1,\dotsc,S_{m-1}$. Return $\tilde{x} = \arg\max_i w_i \cdot S_i(x^i)$. It follows that $(m-1) S(\tilde{x}) \geq \max_{x \sim e} S(x)$. Note that this is the same value returned by the max-product algorithm.
\end{proof}

Thus, for networks of height 2, we have a clear divide: there is an approximation algorithm with linear approximation factor in the number of internal nodes, and no approximation algorithm with sublinear approximation factor in the number of internal nodes. Allowing an additional level of nodes reduces drastically the quality of the approximations in the worst case:

\begin{theorem} \label{thm:noapprox}
 Unless \p\ equals \np, there is no  $2^{s^\varepsilon}$-approximation algorithm for \map\ in \spns\ for any $0\leq\varepsilon< 1$, where $s$ is the size of the input, even if there is no evidence and the underlying graph is a tree of height 3.
\end{theorem}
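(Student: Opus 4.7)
The plan is to reduce from 3SAT, encode the formula in a small height-2 ``base'' \spn\ with a modest gap between satisfiable and unsatisfiable instances, and then amplify that gap to $2^{s^{\varepsilon}}$ by multiplying together many independent copies; the extra product root raises the height from 2 to exactly 3.

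Given a 3SAT formula $\varphi$ with $n$ variables and $m$ clauses, I would first encode it into the height-2 tree \spn\ $S_{0}$ depicted in Figure~\ref{fig:spn-cnf}: a uniformly weighted sum root with $7m$ product children, one per locally satisfying assignment of each clause, each product having $n$ leaves (point masses on the three clause variables and uniform distributions on the remaining $n-3$). Making every leaf a separate node turns $S_{0}$ into a tree of size $O(mn)$, and a direct computation gives
\[
 S_{0}(x) = \frac{(1/2)^{n-3}}{7m}\cdot\bigl|\{j : x\vDash C_{j}\}\bigr|,
\]
so $\max_{x} S_{0}(x)=\lambda m$ if $\varphi$ is satisfiable and $\max_{x}S_{0}(x)\le\lambda(m-1)$ otherwise, where $\lambda=(1/2)^{n-3}/(7m)$. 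This gives the baseline multiplicative gap $m/(m-1)$.

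For the amplification, given $\varepsilon<1$, pick a constant $a=a(\varepsilon)$ to be fixed below and set $k=\lceil n^{a}\rceil$. Construct $S$ by placing a new product root with $k$ children that are isomorphic copies $S_{0}^{(1)},\dots,S_{0}^{(k)}$ of $S_{0}$, each built over its own disjoint variable vector $X^{(1)},\dots,X^{(k)}$. Pairwise disjoint scopes make the product decomposable, keep the graph a tree, and place the root at height exactly 3. Decomposability also gives $\max S=(\max S_{0})^{k}$, so the gap between satisfiable and unsatisfiable $\varphi$ is at least $(m/(m-1))^{k}$, whose base-2 logarithm is $\Omega(k/m)$.

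The main, and only delicate, step is calibrating $a$ so that this gap dominates $2^{s^{\varepsilon}}$. Since $s=\Theta(kmn)$, the requirement $k/m\ge s^{\varepsilon}$ reduces to $k^{1-\varepsilon}\ge c\cdot n^{\varepsilon}m^{1+\varepsilon}$ for a universal constant $c$; because $m$ is polynomial in $n$ for nontrivial 3SAT instances, this is satisfied for all sufficiently large inputs once $a$ is chosen larger than a suitable constant depending only on $\varepsilon$. Since $\varepsilon<1$ this constant is finite, $k$ remains polynomial in $n$, and the whole reduction runs in polynomial time. A hypothetical polynomial-time $2^{s^{\varepsilon}}$-approximation algorithm for \map\ in \spns, applied to $S$ with empty evidence and followed by a polynomial-time bottom-up evaluation of the value of its output, would then distinguish satisfiable from unsatisfiable 3SAT instances, contradicting $\p\ne\np$.
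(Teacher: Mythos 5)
Your proposal is correct and follows essentially the same route as the paper's proof: the same clause-by-satisfying-assignment encoding into a height-2 sum-of-products (the paper's Figure with the $7m$ product nodes and uniform weights $1/(7m)$), the same $m/(m-1)$ gap, and the same amplification by taking polynomially many disjoint-scope copies under a product root to reach height 3, using $\ln\frac{m}{m-1}\geq\frac{1}{m}$ to beat $2^{s^{\varepsilon}}$. The only difference is cosmetic: the paper fixes the number of copies by an explicit closed-form formula in $\varepsilon$, $m$ and the copy size, whereas you choose $k=\lceil n^{a}\rceil$ and argue asymptotically (which is fine, since finitely many small instances can be handled by brute force).
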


\begin{proof}
First, we show how to build an \spn\ for deciding satisfiability: Given a Boolean formula $\phi$ in conjunctive normal form, decide if there is a satisfying truth-value assignment. We assume that each clause contains exactly 3 distinct variables (\np-completeness is not altered by this assumption, but if one would like to drop it, then the weights of the sum node we define below could be easily adjusted to account for clauses with less than 3 variables).

Let $X_1,\dotsc,X_n$ denote the Boolean variables and $\phi_1,\dotsc,\phi_m$ denote the clauses in the formula. For $i=1,\dotsc,m$, consider the conjunctions $\phi_{i1},\dotsc,\phi_{i7}$ over the variables of clause $\phi_i$, representing all the satisfying assignments of that clause. For each such assignment, introduce a product node $S_{ij}$ encoding the respective assignment: there is a leaf node with scope $X_k$ whose distribution assigns all mass to value 1 (resp., 0) if and only if $X_k$ appears nonnegated (resp., negated) in $\phi_{ij}$; and there is a leaf node with uniform distribution over $X_k$ if and only if $X_k$ does not appear on $\phi_{ij}$. See Figure~\ref{fig:spn-cnf} for an example.
For a fixed configuration of the random variables, the clause $\phi_i$ is true if and only if one of the product nodes $S_{i1},\dotsc,S_{i7}$ evaluates to 1. And since these products encode disjoint assignments, at most one such product is nonzero for each configuration. We thus have that $\sum_{ij} S_{ij}(x) = m/2^{n-3}$ if $\phi(x)$ is true, and $\sum_{ij} S_{ij}(x) < m/2^{n-3}$ if $\phi(x)$ is false. So introduce a sum node $S$ with all product nodes as children and with uniform weights $1/7m$. There is a satisfying assignment for $\phi$ if and only if $\max_x S(x) \geq 2^{3-n}/7$.

Now take the \spn\ above and make $q$ copies of it with disjoint scopes:
each copy contains different random variables $X_k^t$, $t=1,\dotsc,q$, at the leaves, but otherwise represents the very same distribution/satisfiability problem.
Name each copy $S_t$, $t=1,\ldots q$, and let its size be $s_t$. Denote by $s'=\max_t s_t$ (note that, since they are copies, their size is the same, apart from possible indexing, etc). Connect these copies using a product node $S$ with networks $S_1,\dotsc,S_q$ as children, so that $S(x) = \prod_{t=1}^q S_t(x)$.
Note that $\max_x S_t(x) \geq 2^{3-n}/7$ if there is a satisfying assignment to the Boolean formula, and
$\max_x S_t(x) \leq \frac{m-1}{m}\cdot 2^{3-n}/7$ if there is no satisfying assignment.
Hence,
$\max_x S(x) \geq (2^{3-n}/7)^q$ if there is a satisfying assignment and
$\max_x S(x) \leq ((m-1) 2^{3-n}/(7m))^q$ if there is no satisfying assignment.
Specify
\[
q=1+\left\lfloor(\ln(2)\cdot m \cdot(s'+2)^\varepsilon)^{\frac{1}{1-\varepsilon}}\right\rfloor\, ,
\]
which is polynomial in $s'$, so the \spn\ $S$ can be constructed in
polynomial time and space and has size $s<q(s'+2)$. From the definition of $q$, we have that
\[
q > \left(\ln(2)\cdot m \cdot (s'+2)^\varepsilon\right)^{\frac{1}{1-\varepsilon}} \, .
\]
Raising both sides to $1-\varepsilon$ yields
\[
q > q^\varepsilon \ln(2)\cdot m \cdot (s'+2)^\varepsilon = m  \ln(2^{(q(s'+2))^\varepsilon}) > m \ln 2^{s^\varepsilon} \, .
\]
Since $\frac{1}{m}\leq\ln\frac{m}{m-1}$ for any integer $m>1$, it follows that
\[
q\ln\left(\frac{m}{m-1}\right) > \ln 2^{s^\varepsilon} \, .
\]
By exponentiating both sides, we arrive at
\[
\left(\frac{m}{m-1}\right)^q > 2^{s^\varepsilon}
\,
\text{ hence }
\,
2^{s^\varepsilon}\left(\frac{m-1}{m}\right)^q < 1 \, ,
\]
Finally, by multiplying both sides by $(2^{3-n}/7)^q$, we obtain
\[
2^{s^\varepsilon}\left(\frac{2^{3-n}(m-1)}{7m}\right)^q <  \left(\frac{2^{3-n}}{7}\right)^q \, .
\]
Hence, if we can obtain an $2^{s^\varepsilon}$-approximation for $\max_x S(x)$, then we can decide satisfiability: there is a satisfying assignment to the Boolean formula if and only if the approximation returns a value strictly greater than $(2^{3-n}(m-1)/(7m))^q$.
\end{proof}

According to Theorem~\ref{thm:noapprox}, there is no $2^{f(s)}$-approximation (unless \p\ equals \np) for any sublinear function $f$ of the input size $s$. The following result is used to show that this lower bound is tight.

\begin{theorem} \label{thm:expapprox}
Let $S^+$ denote the sum nodes in \spn\ $S$, and $d_i$ be the number of children of sum node $S_i \in S^+$.
Then there exists a $(\prod_{S_i\in S^+} d_i)$-approximation algorithm for \map\ with input $S$ and evidence $e$.
\end{theorem}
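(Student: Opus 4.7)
The plan is to analyze the standard max-product algorithm, which replaces every sum node by a weighted maximum, and show that it achieves the stated approximation factor. Concretely, define recursively a value $M_T(x)$ at each node $T$: at a leaf, $M_T(x)=S_T(x)$; at a product node, $M_T(x)=\prod_j M_{T_j}(x)$; and at a sum node with children $T_1,\dotsc,T_d$ and weights $w_1,\dotsc,w_d$, $M_T(x)=\max_j w_j M_{T_j}(x)$. Because maximizations commute with products of independent-scope factors and distribute through the $\max$ at sum nodes, a single bottom-up pass computes $\max_x M(x)$, and a top-down traversal recovers a maximizer $\tilde x$ in time linear in the size of $S$. Evidence is incorporated, as usual, by zeroing the probabilities of inconsistent values at the leaves before running the pass.

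The next step is the easy upper bound $M_T(x)\le S_T(x)$, which follows by a routine structural induction from the fact that a weighted maximum of nonnegative numbers never exceeds the corresponding convex combination.

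The core step is the matching lower bound $M_T(x)\ge S_T(x)/D_T$, where $D_T$ is the product of $d_i$ taken over the distinct sum nodes in the sub-\spn\ rooted at $T$. I would prove this by induction on $T$. At a leaf, $D_T=1$ and equality holds. At a product node, decomposability forces the children to have disjoint scopes, and since leaves are tagged by variables, distinct scopes imply that the sub-DAGs rooted at different children share no nodes; hence $D_T=\prod_j D_{T_j}$ and the bound multiplies. At a sum node with $d$ children, the elementary inequality $\max_j w_j y_j \ge \tfrac{1}{d}\sum_j w_j y_j$ combined with the inductive hypothesis gives
\[
M_T(x) \ge \max_j w_j\,\frac{S_{T_j}(x)}{D_{T_j}} \ge \frac{1}{d\cdot\max_j D_{T_j}}\sum_j w_j S_{T_j}(x) = \frac{S_T(x)}{d\cdot\max_j D_{T_j}}.
\]
It then suffices to argue that $d\cdot\max_j D_{T_j}\le D_T$: the sum nodes counted in $\max_j D_{T_j}$ lie in $\bigcup_j \mathrm{subDAG}(T_j)$ and hence inside $\mathrm{subDAG}(T)\setminus\{T\}$, and since every $d_i\ge 1$ the maximum is dominated by the product over that union; multiplying by the factor $d$ contributed by $T$ itself recovers $D_T$.

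Combining the two bounds with the fact that $\tilde x$ maximizes $M$ finishes the argument: for any $x^*$ with $S(x^*)=\max_{x\sim e}S(x)$,
\[
S(\tilde x) \ge M(\tilde x) = \max_{x\sim e} M(x) \ge M(x^*) \ge \frac{S(x^*)}{\prod_{S_i\in S^+} d_i},
\]
which is exactly the claimed approximation guarantee. The main subtlety, and the step I expect to be the crux, is the DAG case at sum nodes: a naive induction using a sum rather than a max would multiply the factors of shared descendant sum nodes several times and destroy the bound. Taking a maximum over the children exactly cancels this over-counting, so decomposability at product nodes (for exact factorization) together with the max-over-children at sum nodes (for deduplication) is what makes the factor collapse to a single product over the distinct sum nodes.
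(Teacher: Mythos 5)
Your proposal is correct and follows essentially the same route as the paper: both analyze the max-product pass and prove, by structural induction, the lower bound $\text{(max-product value)} \geq \left(\prod_{S_i \in S^+} 1/d_i\right)\max_{x\sim e} S(x)$, using the inequality $\max_j w_j y_j \geq \frac{1}{d}\sum_j w_j y_j$ at sum nodes and the disjointness of children's sub-networks (from decomposability) at product nodes. The only cosmetic difference is that you carry a pointwise quantity $M_T(x)$ and maximize at the end, whereas the paper inducts directly on the already-maximized value $\text{pd}(S,e)$.
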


\begin{proof}
There are two cases to consider, based on the value of $S(e)$, which can be checked in polynomial time. If $S(e)=0$, then we can return any assignment consistent with $e$, as the result will be exact (and equal to zero). If $S(e)>0$, then
take the max-product algorithm \cite{PoonDomingos2011}, which consists of an upward pass where sums are replaced by maximizations in the evaluation of an \spn, and a downward pass which selects the maximizers of the previous step.
Define $\text{pd}(S,e)$ recursively as follows.
If $S$ is a leaf then $\text{pd}(S,e)=\max_{x\sim e} S(x)$. If $S$ is a sum node, then $\text{pd}(S,e) = \max_{j=1,\dotsc,t} \, w_j\cdot \text{pd}(S_j,e)$, where $S_1,\dotsc,S_t$ are the children of $S$. Finally, if $S$ is a product node with children $S_1,\dotsc,S_t$, then $\text{pd}(S,e) = \prod_{j=1}^t \text{pd}(S_j,e)$.
Note that $\text{pd}(S,e)$ corresponds to the upward pass of the max-product algorithm; hence it is a lower bound on the value of the configuration obtained by such algorithm.
We prove that the max-product algorithm is a $(\prod_{S_i\in S^+} d_i)$-approximation by proving by induction in the height of the \spn\ that
\[
\text{pd}(S,e) \geq \left(\prod_{S_i\in S^+} \frac{1}{d_i}\right) \max_{x\sim e} S(x) \, .
\]
To show the base of the induction, take a network $S$ of height 0 (i.e., containing a single node). Then $\text{pd}(S,e)=\max_{x\sim e} S(x)$ trivially. So take a network $S$ with children $S_1,\dotsc,S_t$, and suppose (by inductive hypothesis) that
$\text{pd}(S_j,e) \geq (\prod_{S_i\in S^+_j} \frac{1}{d_i}) \max_{x\sim e} S_j(x)$ for every child $S_j$. If $S$ is a product node, then
\begin{align*}
\text{pd}(S,e) = \prod_{j=1}^t \text{pd}(S_j,e) & \geq \prod_{j=1}^t \left(\prod_{S_i\in S^+_j} \frac{1}{d_i}\right) \max_{x\sim e} S_j(x) \\ &= \left(\prod_{S_i\in S^+} \frac{1}{d_i} \right) \prod_{j=1}^t \max_{x\sim e} S_j(x) = \left(\prod_{S_i\in S^+} \frac{1}{d_i}\right)  \max_{x\sim e} S(x)\, ,
\end{align*}
where the last two equalities follow as the scopes of products are disjoints, which implies that the children do not share any node. If $S$ is a sum node, then
\begin{align*}
\text{pd}(S,e) &= \max_{j=1,\dotsc,t}\, w_j\cdot\text{pd}(S_j,e) \\ &\geq \max_{j=1,\dotsc,t}\left(\prod_{S_i\in S^+_j} \frac{1}{d_i}\right) w_j\cdot\max_{x\sim e} S_j(x) \\
&= \max_{j=1,\dotsc,t} \left( \frac{t}{t\prod_{S_i\in S^+_j}d_i}\right) w_j\cdot\max_{x\sim e} S_j(x) \\
&\geq \max_{j=1,\dotsc,t}\, \frac{t}{\prod_{S_i\in S^+} d_i}\cdot w_j\cdot\max_{x\sim e} S_j(x) \\
& = \frac{t\cdot \max_{j=1,\ldots,t} w_j \max_{x\sim e} S_j(x)}{\prod_{S_i\in S^+} d_i}\\
 & \geq \left(\prod_{S_i\in S^+} \frac{1}{d_i}\right) \max_{x\sim e} S(x)\, .
\end{align*}
The first inequality uses the induction hypothesis. The second inequality follows since $1/(t \cdot \prod_{S_i \in S_j^+} d_i) \geq 1/(t \cdot \prod_{S_i \in S^+, S_i \neq S}d_i)=1/\prod_{S_i \in S^+} d_i$.
The last inequality follows as $\max_j w_j \cdot \max_{x\sim e} S_j(x)$ is an upper bound on the value of any child of $S$. This concludes the proof.
\end{proof}

We have the following immediate consequence, showing the tightness of Theorem~\ref{thm:noapprox}.

\begin{corollary}\label{cor:approxmap}
There exists a $2^{\varepsilon \cdot s}$-approximation algorithm for \map\ for some  $0<\varepsilon <1$, where $s$ is the size of the \spn.
\end{corollary}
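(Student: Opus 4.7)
The plan is to derive Corollary~\ref{cor:approxmap} directly from Theorem~\ref{thm:expapprox}. Since the max-product algorithm is a $\prod_{S_i \in S^+} d_i$-approximation, it suffices to show that there is a constant $\varepsilon < 1$ with $\prod_{S_i \in S^+} d_i \le 2^{\varepsilon s}$ for every \spn; equivalently, to bound $\sum_{S_i \in S^+} \log_2 d_i$ by $\varepsilon \cdot s$ with $\varepsilon$ strictly less than $1$.

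The first ingredient is a size-accounting argument. In any reasonable encoding the bitstring size $s$ dominates the number of arcs in the \spn, and the total number of arcs is at least $\sum_{S_i \in S^+} d_i$, since each sum node $S_i$ has exactly $d_i$ outgoing arcs (one per child). So $s \ge \sum_{S_i \in S^+} d_i$, a bound which is already loose because it ignores product arcs, weights, and leaf distributions.

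The second ingredient is the elementary inequality $\log_2 d \le c \cdot d$ for every integer $d \ge 1$, with $c = (\log_2 3)/3 \approx 0.528 < 1$. This follows by noting that on the reals the function $x \mapsto (\log_2 x)/x$ is maximized at $x = e / \ln 2 \approx 2.72$, so its restriction to integers $d \ge 1$ is maximized at $d = 3$. Combining the two ingredients,
\[
\log_2 \prod_{S_i \in S^+} d_i \;=\; \sum_{S_i \in S^+} \log_2 d_i \;\le\; c \sum_{S_i \in S^+} d_i \;\le\; c \cdot s \, ,
\]
and setting $\varepsilon = c$ yields the bound $\prod_{S_i \in S^+} d_i \le 2^{\varepsilon s}$ required by Theorem~\ref{thm:expapprox}.

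I do not foresee any real obstacle: the whole argument is a one-step reduction from Theorem~\ref{thm:expapprox}, with both ingredients essentially routine. The only mildly delicate point is pinning down that the encoding size actually dominates $\sum_{S_i \in S^+} d_i$, but this is covered by the paper's standing assumption (at the end of Section~\ref{sec:spns}) that instances are represented as bitstrings using a reasonable encoding, under which each arc consumes at least one bit.
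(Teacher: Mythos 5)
Your proposal is correct and follows essentially the same route as the paper: reduce to bounding $\prod_{S_i\in S^+} d_i$ via Theorem~\ref{thm:expapprox}, note $s \geq \sum_{S_i\in S^+} d_i$ from the encoding, and apply $\log_2 d \leq (\log_2 3/3)\, d$ (the paper phrases this as $3^{x/3} \geq x$) to get $\varepsilon = \log_2 3/3 \approx 0.528$. The only blemish is a harmless slip in your justification of that inequality --- $(\log_2 x)/x$ is maximized on the reals at $x = e \approx 2.72$, not at $x = e/\ln 2 \approx 3.92$ --- but the integer maximum at $d=3$ and the resulting constant are correct either way.
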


\begin{proof}
Assume the network has at least one sum node (otherwise we can find an exact solution in polynomial time). Given the result of Theorem~\ref{thm:expapprox}, we only need to show that there is $\varepsilon <  1$ such that $\prod_{S_i\in S^+} d_i < 2^{\varepsilon \cdot s}$,
with $S^+$ the sum nodes in \spn\ $S$ and $d_i$ be the number of children of sum node $S_i \in S^+$.
Because $s$ is strictly greater than the number of nodes and arcs in the network (as we must somehow encode the graph of $S$), we know that $s >  \sum_{S_i\in S^+} d_i$. One can show that $3^{x/3} > x$ for any positive integer.
Hence,
\[
\prod_{S_i\in S^+} d_i \leq \prod_{S_i\in S^+} 3^{d_i/3} = \prod_{S_i\in S^+} 2^{d_i\log_2(3)/3} = 2^{\log_2(3)/3 \cdot\sum_{S_i\in S^+} d_i}   < 2^{s\log_2(3)/3} < 2^{\varepsilon \cdot s} \, ,
\]
for some $\varepsilon < 0.5284$.
\end{proof}

The previous result shows that the max-product algorithm achieves tight upper bounds on the approximation factor. This however does not rule out the existence of approximation algorithms that achieve the same (worst-case) upper bound but perform significantly better on average. For instance, consider the following algorithm that takes an \spn\ $S$ and evidence $e$, and returns $\text{amap}(S,e)$ as follows, where $\text{amap}$ is short for
{\emph{argmax-product}} algorithm.


\begin{mdframed}
\noindent \textbf{Argmax-Product Algorithm}
\begin{itemize}
  \item If $S$ is a sum node with children $S_1,\dotsc,S_t$, then compute
  \[
  \text{amap}(S,e) = \arg\max_{x\in\{x^1,\dotsc,x^t\}} \sum_{j=1}^t w_j\cdot S_j(x) \, , \]
where $x^k = \text{amap}(S_k,e)$,  that is, $x^k$ is the solution of the \map\ problem obtained by argmax-product for network $S_k$ (argmax-product is run bottom-up).
 \item Else if $S$ is a product node with children $S_1,\dotsc,S_t$, then  $\text{amap}(S,e)$ is the concatenation of $\text{amap}(S_1,e)$, $\dotsc$, $\text{amap}(S_t,e)$.
 \item Else, $S$ is a leaf, so return $\text{amap}(S,e)=\arg\max_{x \sim e} S(x)$.
\end{itemize}
\end{mdframed}

Argmax-product has a worst-case time complexity quadratic in the size of the network;
that is because the evaluation of all the children of a sum node with the argument which maximizes each of the children takes linear time (with a smart implementation, it might be possible to achieve subquadratic time).
For comparison, the max-product (with a smart implementation to keep track of solutions and evaluations) takes linear time. While this is a drawback of the argmax-product algorithm, worst-case quadratic time is still quite efficient.
More importantly, argmax-product always produces an approximation at least as good as that of max-product, and possibly exponentially better:

\begin{theorem} \label{thm:amap}
For any \spn\ $S$ and evidence $e$, we have that $S(\text{amap}(S,e)) \geq S(\text{PD}(S,e))$, where $\text{PD}(S,e)$ is the configuration returned by the max-product algorithm. Moreover, there exists $S$ and $e$ such that $S(\text{amap}(S,e)) > 2^{m} S(\text{PD}(S,e))$, where $m$ is the number of sum nodes in $S$.
\end{theorem}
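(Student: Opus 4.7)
The plan is a structural induction for the ``always at least as good'' half of the claim, together with an explicit family of instances realizing the exponential gap. The base case is immediate: at a single leaf, both $\text{amap}(S,e)$ and $\text{PD}(S,e)$ return $\arg\max_{x\sim e} S(x)$. At a product node, both algorithms concatenate the recursive outputs on disjoint scopes, so the inductive hypothesis applied to each child gives $S(\text{amap}(S,e)) = \prod_j S_j(\text{amap}(S_j,e)) \geq \prod_j S_j(\text{PD}(S_j,e)) = S(\text{PD}(S,e))$. All the interesting behavior is at sum nodes.

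At a sum node $S = \sum_j w_j S_j$, max-product returns $\text{PD}(S_{j^\star},e)$, where $j^\star = \arg\max_j w_j\cdot\text{pd}(S_j,e)$, while argmax-product evaluates the exact value $S(x^k) = \sum_j w_j S_j(x^k)$ at every candidate $x^k = \text{amap}(S_k,e)$ and keeps the best. In particular, the candidate $x^{j^\star}$ is in argmax-product's candidate set, and the inductive hypothesis at $S_{j^\star}$ yields $S_{j^\star}(x^{j^\star}) \geq S_{j^\star}(\text{PD}(S_{j^\star},e))$. The plan is to combine this with the maximality of $j^\star$ and with the auxiliary inductive bound $S_k(\text{amap}(S_k,e)) \geq \text{pd}(S_k,e)$ (which follows from the leaf and product cases via a parallel induction) to obtain $\max_k S(x^k) \geq S(\text{PD}(S,e))$. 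The main obstacle is controlling the cross terms $w_j S_j(x^{j^\star})$ for $j \neq j^\star$, which are unconstrained by the per-child inductive hypotheses; the argument must either absorb them using the maximality of $j^\star$ or fall back on one of the other candidates $x^k$.

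For the exponential separation, the plan is to design a small gadget consisting of a single sum node over a fresh pair of variables, in which argmax-product beats max-product by a factor strictly greater than $2$, and then to take the product of $m$ such independent copies on disjoint scopes. The gadget is engineered so that one child has the larger $\text{pd}$ value, luring max-product into a configuration whose true $S$-contribution is small, while argmax-product sees the exact weighted sum and picks a different candidate with larger $S$-value. Because the copies have disjoint scopes and are joined only by product nodes (which contribute no sum nodes and multiply $S$-values), the per-gadget ratios compound multiplicatively to $S(\text{amap}(S,e)) > 2^m\, S(\text{PD}(S,e))$. This mirrors the independent-copies amplification used in the proof of Theorem~\ref{thm:noapprox}.
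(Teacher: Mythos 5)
The genuine gap is in the first half, exactly at the point you flag and then leave open: the sum-node case of the induction. Neither of your two escape routes closes it. The maximality of $j^\star$ only constrains the quantities $w_j\,\text{pd}(S_j,e)$, not the true values $S_j(y)$ at the particular configuration $y=\text{PD}(S_{j^\star},e)$ that max-product returns; and the remaining candidates $x^k$ can all be poor under $S$ because some sibling assigns them probability zero. The cross terms can in fact dominate. Take a root sum node with weights $\sfrac{5}{8}$ and $\sfrac{3}{8}$ over products $A=A_X\times A_Y$ and $B=B_X\times B_Y$, where $A_X$ is a sum node over binary $X$ with leaves $\pr(X{=}1)=1,0,0,0$ and weights $\sfrac{2}{5},\sfrac{1}{5},\sfrac{1}{5},\sfrac{1}{5}$, $A_Y$ is the leaf $\pr(Y{=}1)=1$, $B_X$ is the leaf $\pr(X{=}1)=1$, and $B_Y$ is the leaf $\pr(Y{=}1)=0.49$. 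Max-product selects $A$ at the root (since $\frac{5}{8}\cdot\frac{2}{5}=0.25>\frac{3}{8}\cdot 0.51$) and outputs $(X,Y)=(1,1)$ with $S(1,1)=0.25+\frac{3}{8}\cdot 0.49\approx 0.434$, whereas argmax-product's only candidates are $\text{amap}(A)=(0,1)$ with $S(0,1)=0.375$ and $\text{amap}(B)=(1,0)$ with $S(1,0)\approx 0.191$. So the inequality you are trying to prove by induction fails on this instance, and no amount of care in the inductive step will rescue it. For what it is worth, the paper's own justification of this half is the single sentence that max-product's selection at each sum node is among the configurations tried by argmax-product; that holds only when $\text{amap}$ and $\text{PD}$ already coincide on the children (e.g., when every descendant sum node has only leaves or products of leaves below it, as in height-2 networks and in the construction of Figure~\ref{fig:amap}), which is precisely the assumption your cross-term worry undermines. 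Either restrict the claim to such networks or find a different invariant; the induction as you set it up cannot be completed.

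Your plan for the second half, by contrast, is essentially the paper's proof: a one-sum-node gadget on which argmax-product beats max-product by a factor strictly greater than $2$, replicated $m$ times on disjoint scopes under a product root so that the per-copy ratios multiply. The paper's concrete gadget is a sum node over a single binary variable $X_i$ with leaf children $\pr(X_i{=}1)=1,0,0,0$ and weights $\sfrac{5}{16},\sfrac{11}{48},\sfrac{11}{48},\sfrac{11}{48}$: max-product is lured to $x_i=1$ (value $\sfrac{5}{16}$) while argmax-product returns $x_i=0$ (value $\sfrac{11}{16}$), a per-copy ratio of $11/5>2$. You still need to exhibit such numbers, and a single variable per copy suffices rather than a pair, but the design principle you describe is the right one.
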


\begin{proof}
It is not difficult to see that $S(\text{amap}(S,e)) \geq S(\text{PD}(S,e))$, because the configuration that is selected by max-product at each sum node is one of the configurations that are tried by the maximization of argmax-product (and both algorithms perform the same operation on leaves and product nodes). To see that this improvement can be exponentially better, consider the \spn\ $S_i$ in Figure~\ref{fig:amap}.
Let $\text{pd}(S_i,e)$ be defined as in the proof of Theorem~\ref{thm:expapprox}.
One can verify that $\text{pd}(S_i,e) = 5/16$, while
\[
 S_i(\text{amap}(S_i,e)) = 3 \cdot 11/48 = 11/16 > 2\cdot 5/16\, .
\]
Now, create an \spn\ $S$ with a product root node connected to children $S_1,\dotsc,S_m$ as described (note that the scope of $S$ is $X_1,\dotsc,X_m$).
Then,
\[
  S(\text{amap}(S,e)) = (11/16)^m > 2^m (5/16)^m = 2^m \cdot \text{pd}(S,e) \, .
\]
The result follows as (for this network) $\text{pd}(S,e)=S(\text{PD}(S,e))$.
\end{proof}

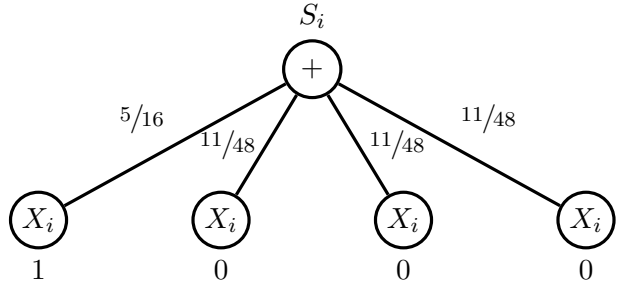
\begin{figure} \centering
  \begin{tikzpicture}[very thick]
    \node[draw,circle,label=above:{$S_i$}] (A) at (0,2.5) {$+$};
    \node[draw,circle,label=below:{$1$},inner sep=2pt] (B) at (-3.6,0.5) {$X_i$};
    \node[draw,circle,label=below:{$0$},inner sep=2pt] (C) at (-1.2,0.5) {$X_i$};
    \node[draw,circle,label=below:{$0$},inner sep=2pt] (D) at (1.2,0.5) {$X_i$};
    \node[draw,circle,label=below:{$0$},inner sep=2pt] (E) at (3.6,0.5) {$X_i$};

    \draw (A) edge node[above left] {$\sfrac{5}{16}$} (B);
    \draw (A) edge node[left] {$\sfrac{11}{48}$} (C);
    \draw (A) edge node[right] {$\sfrac{11}{48}$} (D);
    \draw (A) edge node[above right] {$\sfrac{11}{48}$} (E);

\end{tikzpicture}
\caption{Fragment of the sum-product network used to prove  Theorem~\ref{thm:amap}.}
\label{fig:amap}
\end{figure}

As an immediate result, the solutions produced by argmax-product achieve the  upper bound on the complexity of approximating \map. We hope that this simple result motivates researchers to seek for more sophisticated algorithms that exhibit the time performance of max-product while achieving the accuracy of argmax-product.

\section{Experiments} \label{sec:experiments}

We perform two sets of experiments to verify empirically the difference between argmax-product and max-product. We emphasize that our main motivation is to understand the complexity of the problem and how much it can be approximated efficiently in practice. In the light of Theorem~\ref{thm:amap}, one may suggest that a \map\ problem instance is easy to approximate when argmax-product and max-product produce similar approximations.

In the first set of evaluations, we build \spns\ from random instances for the maximum independent set problem, that is, we generate random undirected graphs with number of vertices given in the first column of Table~\ref{tab:mis} and number of edges presented as percentage of the maximum number of edges (that is, the number of edges in a complete graph). For each graph, we apply the transformation presented in the proof of Theorem~\ref{thm:indepset} to obtain an \spn. The number of nodes of such \spn\ is given in the third column of the table. The fourth column shows the ratio of the values of the configurations found by argmax-product and max-product (that is, $S(\text{amap}(S,e))/S(\text{PD}(S,e))$), averaged over 100 random repetitions.
Take the first row: On average, the result of argmax-product is $1.58$ times better than the result of max-product in \spns\ encoding maximum independent set problems with graphs of 5 vertices and 10\% of edges (which creates \spns\ of 31 nodes: $5\cdot 5=25$ leaves, 5 product nodes and one sum node, same structure as exemplified in Figure~\ref{fig:indset}). The standard deviation for these ratios are also presented (last column in the table). It is clear that argmax-product obtains results that are significantly better than max-product, often surpassing the $2^m=2$ ratio lower bound in Theorem~\ref{thm:amap}.
While in theory argmax-product can be significantly slower than max-product, we did not observe significant differences in running time for these \spns\ with up to 6481 nodes (either algorithm terminated almost instantaneously).

\begin{table}
\centering
\begin{tabular}{ccccc}
\toprule
Vertices & \% Edges & Nodes & Ratio & Std.\ Dev. \\
\midrule
  5 & 10 & 31 & 1.58 & 0.76\\
 5 & 20 & 31 & 1.72 & 0.78\\
 5 & 40 & 31 & 1.61 & 0.75\\
 5 & 60 & 31 & 1.57 & 0.60\\
 10 & 10 & 111 & 1.89 & 0.95\\
 10 & 20 & 111 & 2.16 & 1.09\\
 10 & 40 & 111 & 2.12 & 1.01\\
 10 & 60 & 111 & 2.04 & 0.89\\
 20 & 10 & 421 & 1.94 & 0.88\\
 20 & 20 & 421 & 2.89 & 1.72\\
 20 & 40 & 421 & 3.02 & 1.24\\
 20 & 60 & 421 & 2.60 & 1.04\\
 40 & 10 & 1641 & 2.64 & 1.42\\
 40 & 20 & 1641 & 3.37 & 1.45\\
 40 & 40 & 1641 & 2.33 & 0.73\\
 40 & 60 & 1641 & 2.27 & 0.76\\
 80 & 10 & 6481 & 3.96 & 1.81\\
 80 & 20 & 6481 & 2.10 & 0.49\\
 80 & 40 & 6481 & 1.07 & 0.26\\
 80 & 60 & 6481 & 1.04 & 0.20\\
\bottomrule
\end{tabular}
\caption{Empirical comparison of the quality of approximations produced by  argmax-product and max-product in \spns\ encoding randomly generated instances of the maximum independent set problem.
 Vertices and (percentage of) edges are related to the maximum independent set problem; while nodes is the number of nodes in the corresponding \spn. Each row summarizes 100 repetitions. The last two columns report the ratio of the probabilities of configurations found by argmax-product and max-product, together with its standard deviation.}
\label{tab:mis}
\end{table}

\begin{table*}[th]
\centering
\resizebox{\textwidth}{!}{
\begin{tabular}{lccccccccc}
\toprule
\multirow{2}{*}{Dataset} & No.\ of & No.\ of & No.\ of & No.\ of & No.\ of& \multirow{2}{*}{Height} & No Evidence & \multicolumn{2}{c}{50\% Evidence}\\ \cline{8-10}
 & variables & samples & nodes & nodes $+$ & nodes $\times$ &  & Ratio & Ratio & StDev \\
\midrule
audiology & 70 & 204 & 13513 & 28 & 27 & 12 & 1.0000 & 1.0029 & 0.0133 \\
breast-cancer & 10 & 258 & 1357 & 23 & 24 & 24 & 1.1572 & 1.1977 & 0.1923 \\
car & 7 & 1556 & 16 & 2 & 3 & 3 & 1.1028 & 1.0514 & 0.0514 \\
cylinder-bands & 33 & 487 & 3129 & 61 & 62 & 62 & 1.1154 & 1.1185 & 0.0220 \\
flags & 29 & 175 & 787 & 25 & 26 & 26 & 1.3568 & 1.3654 & 0.0363 \\
ionosphere & 34 & 316 & 603 & 43 & 44 & 42 & 1.1176 & 1.1109 & 0.0273 \\
nursery & 9 & 11665 & 20 & 2 & 3 & 3 & 1.6225 & 1.2060 & 0.2926 \\
primary-tumor & 18 & 306 & 804 & 113 & 114 & 114 & 1.0882 & 1.0828 & 0.0210 \\
sonar & 61 & 188 & 1057 & 101 & 102 & 26 & 1.2380 & 1.2314 & 0.0261 \\
vowel & 14 & 892 & 23 & 2 & 3 & 3 & 1.0751 & 1.0666 & 0.0229 \\
\bottomrule
\end{tabular}}
\caption{Empirical comparison of the quality of approximations produced by  argmax-product and max-product in \spns\ learned from UCI datasets with height at least 2.
The columns indicate the number of variables and samples of each dataset; total number of nodes, sum nodes, product nodes and height of the learned \spn; ratio between argmax-product and max-product solutions for test cases witouth evidence, for test cases with 50\% of variabels as evidence, and the standard deviation of the latter.}
\label{tab:uci}
\end{table*}

In the second set of evaluations, we use realistic \spns\ that model datasets from the UCI Repository.\footnote{Obtained at    \url{http://archive.ics.uci.edu/ml/}.} The \spns\ are learned using an implementation of the ideas presented in~\cite{GensD2013}. For each dataset, we use a random sample of 90\% of the dataset for learning the model and save the remaining 10\% to be used as test.  Then we perform two types of queries. The first type of query consists in using the learned network to compute the mode of the corresponding distribution, that is, running both algorithms with no evidence. In the second type, we split the set of variables in half; for each instance (row) in the test set, we set the respective evidence for the variables in the first half, and compute the \map\ configuration/value for the other half. This allows us to assess the effect of evidence in the approximation complexity.
The results are summarized in Table~\ref{tab:uci}. The first three columns  display information about the dataset (name, number of variables and number of samples); the middle four columns display information about the learned \spn\ (total number of nodes, number of sum nodes, number of product nodes and height); the last three columns show the approximation results: the ratio of probabilities of solutions found by argmax-product and max-product in the test cases with no evidence, the same ratio in test cases with 50\% of variables given as evidence, and the standard deviation for the latter (as it is run over different evidences corresponding to 10\% of the data).
We only show datasets where the learned \spn\ has at least one sum node, since in the other cases the \map\ can be trivially found and a comparison would be pointless. The results suggest that in these \spns\ learned from real data, the difference between argmax-product and max-product is less prominent, yet non negligible. We also see that the complexity of approximation is not considerably affected by the presence of evidence.

\section{Conclusion} \label{sec:conclusion}

We analyzed the complexity of maximum a posteriori inference in sum-product networks and showed that it relates with the height of the underlying graph.
We first provided an alternative (and more direct) proof of \np-hardness of maximum a posteriori inference in sum-product networks.
Our proof uses a reduction from maximum independent set in undirected graphs, from which we obtain the non-approximability for any sublinear factor in the size of input, even in networks of height 2 and no evidence.
We then showed that this limit is tight, that is, that there is a polynomial-time algorithm that produces solutions which are at most a linear factor for networks  of height 2.
We also showed that in networks of height 3 or more, complexity of approximation increases considerably: there is no approximation within a factor $2^{f(n)}$, for any sublinear function $f$ of the input size $n$. This is also a tight bound, as we showed that the usual max-product algorithm finds an approximation within factor $2^{c \cdot n}$ for some constant $c < 1$.
Last, we showed that a simple modification to max-product results in an algorithm that is at least as good, and possibly greatly superior to max-product. We compared both algorithms in two different types of networks: shallow sum-product networks that encode random instances of the maximum independent set problem and deeper sum-product networks learned from real-world datasets. The empirical results show that while the proposed algorithm produces better solutions than max-product does, this improvement is less pronounced in the deeper realistic networks than in the shallower synthetic networks. This suggests that characteristics other than the height of the network might be equally important in determining the hardness of approximating maximum a posteriori inference, and that further (theoretical and empirical) investigations are required. We hope that these results foster research on approximation algorithms for maximum a posteriori inference in sum-product networks.

\section*{Acknowledgements}

Our implementation of the argmax-product algorithm was built on top of the GoSPN library (\url{https://github.com/RenatoGeh/gospn}). We thank Jun Mei for spotting a typo in the proof of Theorem~\ref{thm:noapprox}. The second author received financial support from the S\~ao Paulo Research Foundation (FAPESP) grant \#2016/01055-1 and the CNPq grants \#420669/2016-7 and PQ \#303920/2016-5.

\bibliographystyle{abbrv}

\end{document}